\documentclass[10pt,conference]{IEEEtran}
\IEEEoverridecommandlockouts

\usepackage{cite}
\usepackage{amsmath,amssymb,amsfonts}
\usepackage{algorithmic}
\usepackage{graphicx}
\usepackage{textcomp}
\usepackage{xcolor}
\def\BibTeX{{\rm B\kern-.05em{\sc i\kern-.025em b}\kern-.08em
    T\kern-.1667em\lower.7ex\hbox{E}\kern-.125emX}}

\usepackage[colorlinks=true, allcolors=blue]{hyperref}
\usepackage[nolist,nohyperlinks]{acronym}
\usepackage{svg}
\usepackage{tikz}
\usetikzlibrary{calc}
\usepackage{subcaption}
\usepackage{multirow}
\usepackage{orcidlink}

\usepackage{bm}
\usepackage{amsthm} 
\usepackage{xfrac} 
\usepackage{booktabs} 
\usepackage{float} 

\usepackage{amsmath,amsfonts,bm}

\newtheorem{theorem}{Theorem}[section]

\newtheorem{lemma}[theorem]{Lemma}
\newtheorem{property}[theorem]{Property}

\newcommand{\qf}{\mathfrak{q}}
\newcommand{\blf}{\mathfrak{b}}

\def\1{\bm{1}}

\DeclareMathAlphabet{\mathsfit}{\encodingdefault}{\sfdefault}{m}{sl}
\SetMathAlphabet{\mathsfit}{bold}{\encodingdefault}{\sfdefault}{bx}{n}

\def\sC{{\mathbb{C}}}

\def\sN{{\mathbb{N}}}

\newcommand{\E}{\mathbb{E}}
\newcommand{\C}{\mathbb{C}}
\newcommand{\F}{\mathbb{F}}

\newcommand{\R}{\mathbb{R}}
\newcommand{\Cl}{\mathbb{C}l}

\begin{document}

\begin{acronym}
\acro{kan}[KAN]{Kolmogorov-Arnold Network}
\acro{kat}[KAT]{Kolmogorov-Arnold Representation Theorem}
\acro{cvkan}[CVKAN]{Complex-Valued Kolmogorov-Arnold Network}
\acro{cliffkan}[$\Cl$KAN]{Clifford Kolmogorov-Arnold Network}
\acro{sobolcliffkan}[Sobol-$\Cl$KAN]{Sobol Clifford Kolmogorov-Arnold Network}
\acro{cvnn}[CVNN]{Complex-Valued Neural Network}
\acro{mlp}[MLP]{Multilayer Perceptron}
\acro{rbf}[RBF]{Radial Basis Function}
\acro{silu}[SiLU]{Sigmoid Linear Unit}
\acro{mae}[MAE]{Mean Absolute Error}
\acro{mse}[MSE]{Mean Square Error}
\acro{ce}[CE]{Cross-Entropy}
\acro{rqmc}[RQMC]{Randomized Quasi-Monte Carlo}
\acro{qmc}[QMC]{Quasi-Monte Carlo}
\acro{mc}[MC]{Monte Carlo}
\acro{cga}[CGA]{Conformal Geometric Algebra}
\acro{pga}[PGA]{Projective Geometric Algebra}
\acro{ga}[GA]{Geometric Algebra}
\acro{ca}[CA]{Clifford Algebra}
\acro{pde}[PDE]{Partial Differential Equation}
\acro{ega}[EGA]{Euclidean Geometric Algebra}
\end{acronym}

\title{Clifford Kolmogorov-Arnold Networks
}

\makeatletter
\newcommand{\newlineauthors}{
  \end{@IEEEauthorhalign}\hfill\mbox{}\par
  \mbox{}\hfill\begin{@IEEEauthorhalign}
}
\makeatother

\author{\IEEEauthorblockN{Matthias Wolff\textsuperscript{*} \orcidlink{0009-0006-1432-6265}{}}
\IEEEauthorblockA{
\textit{Dept. of Computer Science}\\
\textit{University of Münster}\\
Münster, Germany 
}
\and
\IEEEauthorblockN{Francesco Alesiani\textsuperscript{*} \orcidlink{0000-0003-4413-7247}}
\IEEEauthorblockA{
\textit{Data Science Div.}\\
\textit{NEC Labs Europe}\\
Heidelberg, Germany \\
}
\and
\IEEEauthorblockN{Christof Duhme \orcidlink{0009-0004-1853-2862}}
\IEEEauthorblockA{
\textit{Dept. of Computer Science}\\
\textit{University of Münster}\\
Münster, Germany \\
}
\and
\IEEEauthorblockN{Xiaoyi Jiang \orcidlink{0000-0001-7678-9528}}
\IEEEauthorblockA{
\textit{Dept. of Computer Science}\\
\textit{University of Münster}\\
Münster, Germany \\
}
\thanks{\textsuperscript{*}Equal contribution.}
}
\maketitle

\begin{abstract}
We introduce Clifford Kolmogorov-Arnold Network (ClKAN), a flexible and efficient architecture for function approximation in arbitrary Clifford Algebra spaces. We propose the use of Randomized Quasi-Monte Carlo grid generation as a solution to the exponential scaling associated with higher-dimensional algebras. Our ClKAN also introduces new batch normalization strategies to deal with variable domain input.
ClKAN finds application in scientific discovery and engineering, and is validated in synthetic and physics-inspired tasks. 
\end{abstract}
\begin{IEEEkeywords}
Kolmogorov-Arnold Networks, Complex-Valued Neural Networks, Hypercomplex Neural Networks
\end{IEEEkeywords}

\textbf{\textcopyright 2026 IEEE}. Personal use of this material is permitted.  Permission from IEEE must be obtained for all other uses, in any current or future media, including reprinting/republishing this material for advertising or promotional purposes, creating new collective works, for resale or redistribution to servers or lists, or reuse of any copyrighted component of this work in other works.

\section{Introduction}
The \ac{kat} \cite{kolmogorov1961representation}
provides an alternative way of representing functions of multiple variables, including continuous and discontinuous functions \cite{ismayilova2024kolmogorov}. While it is a useful theoretical tool, its application was only marginally explored \cite{sprecher2002space}.
After Liu et al.~\cite{liu2025} have recently sparked new interest in the field proposing \acp{kan}, 
there has been a surge of extensions as well as applications of this new architecture \cite{somvanshi2025survey}. 
Especially for function-fitting tasks, \acp{kan} have shown to be a promising alternative to conventional \acp{mlp} \cite{yu2024kan} while also offering intrinsic interpretability.

Wolff et al. \cite{wolff2025cvkan} have transferred the advantages of \acp{kan} into the field of \acp{cvnn} and introduced a \ac{cvkan} by making use of \acp{rbf}. Che et al. \cite{che2026improved} have then adapted the underlying \ac{kat} to the complex domain with mathematical proofs and further improved the \ac{cvkan} by changing the residual activation function and making the shape of the \acp{rbf} learnable.

However, scientific discovery, engineering, computer vision or robotic applications work with high-dimensional representations, for example, to describe electromagnetic fields, weather state variables, three-dimensional objects, or multi-joint robot arms. 
In these scenarios, complex numbers are not sufficient. 

In this work we therefore propose \ac{cliffkan}, an extension to \ac{cvkan} for higher dimensions than complex numbers. 
By increasing the dimensionality of the grid on which the \acp{rbf} are defined 
and using the geometric product defined in the \ac{ca},  we can learn a function on each edge connecting two nodes in the \ac{cliffkan} that maps from $\Cl \mapsto \Cl$, similar to \ac{kan} and \ac{cvkan} that map from $\R \mapsto \R$ or $\sC \mapsto \sC$, respectively.

Overall our contributions can be summarized as:
\begin{itemize}
    \item We extend the CVKAN framework \cite{wolff2025cvkan} to the hyper-complex domain using Clifford Algebras (Section~\ref{sec:method}).
    \item We propose two different types of \acp{rbf} as basis functions (Section~\ref{sec:rbf}).
    \item We propose to transition from a uniform grid to a randomly sampled grid to mitigate the curse of dimensionality. In particular, we introduce the use of \ac{rqmc}
    \emph{scrambled Sobol sequence} for the grid generation (Section~\ref{sec:grid_sobol}). 
    \item We show the expressivity of the proposed approach when using the \emph{Sobol grid} (Section~\ref{sec:expressivity}).
    \item We study the effect of different batch normalization techniques suitable for Clifford domain and \ac{kan} architecture in general (Section~\ref{sec:batchnorm}).
    \item We provide our code of the model and the experiments as an open-source library to promote open science\footnote{Link to code base: \url{https://github.com/M-Wolff/CliffordKAN}}.
    \end{itemize}
In addition, we evaluate (Section \ref{sec:experiments}) \ac{cliffkan} against the conventional \ac{cvkan} (described in Section~\ref{sec:KAN}) in complex-valued experiments, we study the influence of different batch normalization strategies, and the use of the proposed \acp{rbf} (Section~\ref{sec:exp_funcfit}). 
Further, we study the suitability of our proposed \ac{cliffkan} with higher-dimensional \acp{ca} (introduced in Section~\ref{sec:clifford}), e.g. quaternions, in synthetic function-fitting tasks. We show that our proposed \emph{Sobol grid} approach is helpful for reducing the number of parameters (Section~\ref{sec:exp-high-dim}), while providing flexible performances.

\section{Related Work}

\textbf{Kolmogorov-Arnold Function Representation}
theorem provides an alternative tool for function approximation \cite{lai2021kolmogorov}, with the potential to address the curse of dimensionality \cite{poggio2022deep}.
The original KAN architecture has been extended to Convolutional Neural Networks (CNNs)  \cite{ferdausKANICEKolmogorovArnoldNetworks2024,bodner2024convolutional}, and to transformer models \cite{yangKolmogorovArnoldTransformer2024}. To apply \acp{kan} to PolSAR classification, \cite{kuang2025exploring} developed a complex-valued convolutional \ac{kan}.
Furthermore, matrix group equivariance has been integrated into the KAN architecture \cite{huEKANEquivariantKolmogorovArnold2024a}, while \cite{alesianiGeometricKolmogorovArnoldSuperposition2025} provides an extension for geometrical symmetries and invariance.
Adaptive KAN architecture~\cite{alesiani2025variational} reduces the need for hyperparameter optimization. Liu et al. \cite{liu2025kolmogorov} explore the use of KANs in science with symbolic regression. While \cite{wolff2025cvkan,che2026improved} explore the applied and theoretical aspects of complex KAN, and \cite{yu2024residual} studies the role of residual connections. 

\textbf{Clifford Algebra}, or 
Clifford's \ac{ga} provides a powerful mathematical tool for describing spatial relationships and transformations. GA generalizes complex numbers and quaternions into a unified system \cite{lundholm2009clifford}.
\ac{ca} has found application in modeling \acp{pde} \cite{brandstetter2022clifford} or in modeling rotational and translation ($E(3)$) or Lorentz ($O(1,3)$) symmetries 
\cite{brehmer2023geometric,ruhe2023clifford,ruhe2023geometric}, 
which are used to describe interactions in physical systems. 
\ac{cga} is used in robotic vision \cite{bayro2006conformal}, for inverse kinematics \cite{hildenbrand2008inverse}, or in computer vision \cite{dorst2002geometric}.

\section{Background}

\subsection{Clifford Algebra}
\label{sec:clifford}
Clifford Algebra \cite{ruhe2023clifford,lundholm2009clifford,crumeyrolle2013orthogonal} extends Euclidean vector space to a full algebra by introducing the concept of geometric product. Clifford Algebra finds application in various areas of mathematics, physics, science, and engineering \cite{hitzer2024Survey}. If $V$ is a finite-dimensional vector space of dimension $n=\dim V$, over a field $\F$ ($\R$ in the paper) equipped with a quadratic form $\qf: V \to \F$, then $\Cl(V,\qf)$ is the Clifford Algebra, a unitary, associative, noncommutative algebra generated by $V$, where $v^2 = \qf(v)$. 
Each element of the algebra is a linear combination of a finite product of vectors $v_{ij} \in V$, 
\begin{equation*}
    x = \sum_{i \in I} c_i v_{i,1} \dots v_{i,k_i},
\end{equation*}
with $c_i \in \F, k_i \in N_0$, 
$I$ finite.

The bilinear form associated with $\Cl(V,\qf)$ is defined as $\blf(v_1,v_2)=\frac1{2} (\qf (v_1+v_2)-\qf(v_1) -\qf(v_2) )$, which satisfies the identity $v_1v_2+v_2v_1=2 \blf(v_1,v_2), \forall v_1,v_2 \in V$. 
This defines the 
\emph{geometric product} $v_1v_2$ on which the algebra is defined. If $v_1,v_2$ are orthogonal then $\blf(v_1,v_2)=0$ and therefore $v_2 v_1=-v_1 v_2$. 
If $e_1,\dots,e_n$ is an 
orthogonal basis of $V$, then the set of tuples $e_I=e_{i_1} \dots e_{i_{|I|}}, I=(i_1,\dots,i_{|I|}) \subseteq [n]$ 
forms an orthogonal basis for $\Cl(V,\qf)$. Therefore, the dimension of the Clifford Algebra is $D=\dim \Cl(V,\qf) = 2^n$. 
We can partition the algebra into vector spaces $\Cl^{(m)}(V,\qf), m=0,\dots,n$ called \emph{grades} (see \figurename~\ref{fig:clifford}), where the $\dim \Cl^{(m)}(V,\qf) = \binom nm$. Notably, $\Cl^{(0)}(V,\qf)=\F$, and $\Cl^{(1)}(V,\qf)=V$.  The elements of $\Cl^{(2)}(V,\qf)$ and $\Cl^{(3)}(V,\qf)$ are called bivectors and trivectors, while $\Cl^{(n)}(V,\qf)$ contains the pseudoscalar. 
The bilinear operator implies the scalar product $x \cdot y = \blf(x,y) = \frac1{2} (xy+yx), \quad x,y \in \Cl(V,\qf)$, similarly we can define the external product as $ x \wedge y = \frac1{2} (xy-yx)$. We can now write the geometric product between two vectors $v,u \in V$ as
\begin{equation*}
vu = v \cdot u + v \wedge u
\end{equation*}
The resulting object has two components, a scalar $v \cdot u$ and a bivector $v \wedge u$, which represents an area. 
When \ac{ca} is used, e.g. in describing Maxwell's equations \cite{chappell2014geometric}, the electric and magnetic fields can be represented as a single element in the algebra, with $F = E + e_{123}cB$, with $E$ and $B$ the electric and magnetic vectors, while $e_{123}$ is the pseudo-scalar and $c$ is the light speed. Further notice that the element $e_{123}cB$ is a bivector. We can now write, from the original $12$ equations, the Maxwell's equations as $(1/c \nabla_t + \nabla_x) F = 1/\epsilon_0 (\rho - J/c)$, 
where $\rho,J$ represent charges (scalar) and the electric currents (vector), and $\epsilon_0$ the dielectric constant.

For convenience, we refer to the Clifford Algebra as $\Cl_{p,q,r}$, or simply $\Cl$, where the triplet $_{p,q,r}$ counts the number of elements that square to $1,-1$ and $0$, 
i.e. $\qf(e_i) = \{1,-1,0\}$. 

\begin{figure}[t]
    \centering    \includegraphics[trim={0cm 3cm  0cm 1cm},clip,width=\linewidth]{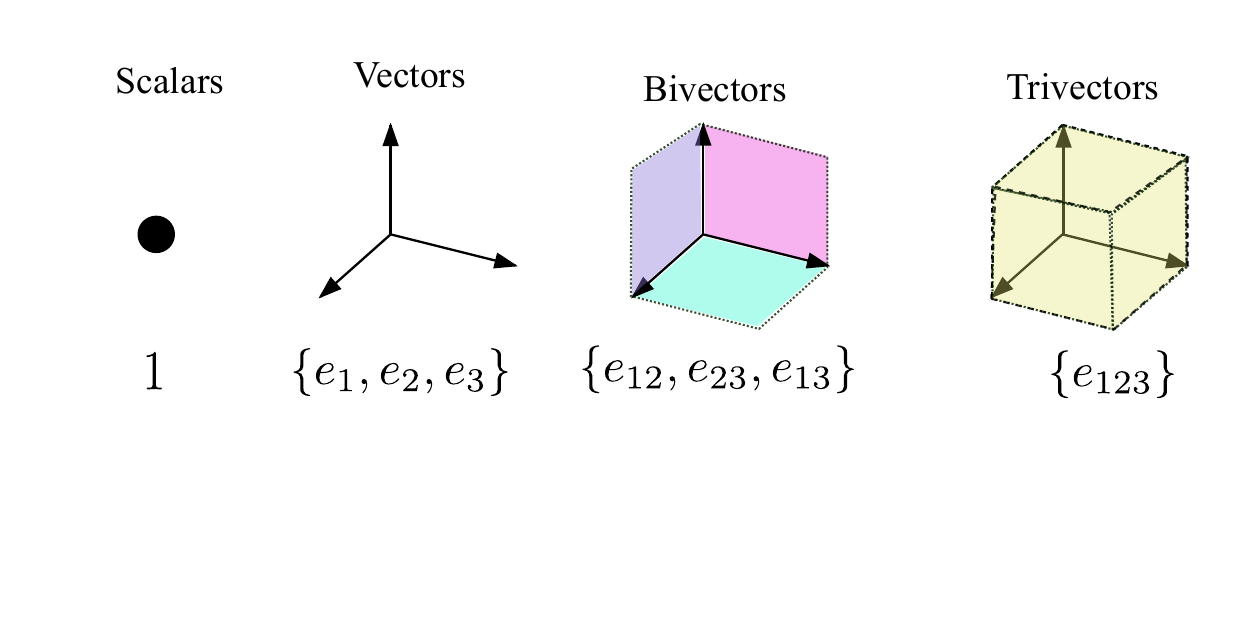}
    \caption{Visualization of $\Cl_{3,0,0}$ grades (scalars, vectors, bivectors and trivectors).
    }
    \label{fig:clifford}
\end{figure}

Notable examples of Clifford Algebras are the \ac{ega}, for example $\Cl_{2,0,0}$ is used in \ac{pde} to model weather \cite{brandstetter2022clifford}, or $\Cl_{3,0,0}$ is used to model Maxwell's equations \cite{chappell2014geometric}. The Projective Geometric Algebra (PGA) $\Cl_{3,0,1}$ and the Conformal Geometric Algebra (CGA) $\Cl_{4,1,0}$ are used for computer vision 
\cite{bayro2006conformal}, and robotic manipulation~\cite{hildenbrand2008inverse} applications. The Spacetime Algebra $\Cl_{3,1,0}$ describes the Minkowski space and is used to model relativistic physics \cite{hestenes2015space}.

\subsection{Complex-valued Kolmogorov-Arnold Network}
\label{sec:KAN}
Wolff et al. \cite{wolff2025cvkan} extended the KAN \cite{liu2025} framework to the complex domain with the \ac{cvkan} inspired by \acp{cvnn}, which are known for improved generalization in signal processing and physics-related tasks.
\ac{cvkan} introduces the complex \ac{rbf} as the base function of the expansion of the univariate function for the \ac{kan} architecture. The complex-valued \ac{rbf} $\phi_\C: \C \rightarrow \R$ is defined as $\phi_\C(x) = \exp (-|x|^2)$, with $|.|$ the absolute value of the complex number \mbox{$x\in\C$}, and centered on a regular grid in $\C$, i.e. grid points $g \in \C$, resulting in the univariate function of the form \mbox{$\Phi_\C(x): \C \to \R = \sum_{g \in G} w_g \phi_\C(x-g)$}. \ac{cvkan} also introduces $\C$\ac{silu}, complex nonlinearity, defined as $\C\text{\ac{silu}} = w \left(\text{\ac{silu}}(\Re(x)) + i \text{\ac{silu}}(\Im(x))\right) + \beta$, where $w,\beta \in \C$ are learnable weight and bias, and $\Re(x)$ and $\Im(x)$ are the real and imaginary part of $x \in \C$, and $\text{\ac{silu}}(x) = x (1+e^{-x})^{-1}$.

\section{Methodology}
\label{sec:method}
In this section, we describe how a \ac{cliffkan} is constructed for any Clifford Algebra $\Cl_{p,q,r}$. The choice of \mbox{$n=p+q+r \in \sN_+$} defines specific algebras (Section~\ref{sec:clifford}).
This algebra is then used for all \acp{rbf} inside the whole \ac{cliffkan} and dictates the rules for all other calculations, e.g. multiplication with weights.

\subsection{\texorpdfstring{\ac{cliffkan}}{ClKAN} Radial Basis Functions}
\label{sec:rbf}
Similar to CVKAN \cite{wolff2025cvkan} we define a \emph{naive \ac{rbf}} \eqref{eq:naiveRBF} with \mbox{$x \in \Cl$} and $\|x\|$ representing the Clifford Algebra $\Cl$ norm of $x$, $\| x\|^2=  x \star \tilde{ x} $, with $\tilde { x}$ the reverse of $x$ and $\star$ the scalar product \cite{dorst2010geometric}.

This formulation for our naive \ac{rbf} corresponds to a mapping of \mbox{$\phi(x): \Cl \rightarrow \R = \Cl^{(0)}$}. 
By using this mapping, the high-dimensional Clifford Algebra space is represented by a real-valued number, the output of the \ac{rbf}, therefore potentially losing information about the rich structure of the input space.

We thus propose the alternative \emph{Clifford \ac{rbf}} \eqref{eq:cliffordspaceRBF} that retains information about the spatial properties of the Clifford-valued input. We achieve this by multiplying the output of the \ac{rbf} with the input itself. 
The product of an element in $\Cl$ by a real number is still an element of $\Cl$, and represents the scaled version of its components. 

\begin{subequations}
\label{eq:kernel_functions}
\begin{minipage}{.45\linewidth}
\begin{align}
    \phi(x) = \exp\left(-\|x\|^2\right)
    \label{eq:naiveRBF}
\end{align}
\end{minipage}
\begin{minipage}{.45\linewidth}
\begin{align}
    \phi(x)_{\Cl} = x ~ \phi(x)
    \label{eq:cliffordspaceRBF}%
\end{align}
\end{minipage}
\vspace{8pt}
\end{subequations}

Equation~\eqref{eq:cliffordspaceRBF} then constitutes a mapping \mbox{$\phi(x)_{\Cl}: \Cl \rightarrow \Cl$} and the output is equal to the input $x \in \Cl$ weighted by the real-valued output of the naive \ac{rbf} $\phi(x)$. The two functions are related by the gradient operator, $\phi(x)_{\Cl} = -\frac1{2} \nabla_x \phi(x)$. Even if $\phi(0)_{\Cl}$ is zero, this does not affect its interpolation capability. 
Following \ac{cvkan}, $\Cl$\ac{silu} is the residual activation function with learnable weight and bias $w,\beta \in \Cl$ for $x \in \Cl$:
\begin{equation}
    \Cl\text{SiLU}(x) = w \text{SiLU}_{\Cl}(x) + \beta
    \label{eq:clsilu}
\end{equation}
where $\text{SiLU}_{\Cl}$ denotes the component-wise application of the SiLU function on each element of $x \in \Cl$ independently.

\subsection{\texorpdfstring{\ac{cliffkan}}{ClKAN} Uniform Grid}
\label{sec:grid}
To construct a learnable activation function \mbox{$\Cl \rightarrow \Cl$}, we need to combine multiple \acp{rbf}.
The \acp{rbf} have to be centered around grid points, so we need to define a grid where each grid point $g$ is also an element of the same underlying Clifford Algebra $g \in \Cl$.

In \ac{cvkan} for complex numbers this grid was defined as a uniform grid with $N_g \times N_g$ grid points $g \in \sC$ within some fixed ranges, e.g. $[-2-2i, 2+2i]$, to likely fall inside the grid of the next layer after batch normalization changes the distribution to mean $\mu=0$ and standard deviation $\sigma=1$. This also allows to use a fixed grid range and not rely on grid extension \cite{liu2025,moody2025automatic} or learnable grid offsets \cite{zheng2025free} that might slow down or complicate the learning process.

This approach can be intuitively transferred to \ac{cliffkan}. Let $D$ be the dimensionality of Clifford Algebra $\Cl$. For example $D=2$ for complex numbers and $D=4$ for quaternions. Then the set of all grid points $G$ consists of $(N_g)^D$ grid points \mbox{g $\in \Cl$}. If we adopt $N_g=8$ grid points per dimension from \ac{cvkan}, then for quaternions we end up with a grid of size $8^4 = 8\times 8\times 8\times 8$ and a total of $4096$ grid points $g \in \Cl_{0,2,0}$.

Consistent with \ac{cvkan} each grid point $g \in G$ is associated with a \ac{rbf} centered around it and thus a learnable weight $w_g \in \Cl$ that controls the influence of the \ac{rbf} at grid point $g$ on the learnable activation function. The \ac{rbf} to use can either be the naive \eqref{eq:naiveRBF} or the Clifford \ac{rbf} \eqref{eq:cliffordspaceRBF} to construct the learnable activation function \eqref{eq:cliffordGridNaiveRBF} or \eqref{eq:cliffordGridCliffordRBF}, respectively.

\begin{subequations}
\label{eq:kan_functions}
\noindent
\begin{minipage}{.48\linewidth}
\begin{align}
   \Phi(x) = \sum_{g \in G} w_g \, \phi(x-g)
   \label{eq:cliffordGridNaiveRBF}
\end{align}
\end{minipage}%
\hfill
\begin{minipage}{.48\linewidth}
\begin{align}
   \Phi_{\Cl}(x) = \sum_{g \in G} w_g \, \phi_{\Cl}(x-g)
   \label{eq:cliffordGridCliffordRBF}
\end{align}
\end{minipage}
\end{subequations}

If we use a uniform grid, also referred to as \emph{full grid}, the number of learnable weights increases exponentially with the number of dimensions in the Clifford Algebra (Section~\ref{sec:clifford}). 
Therefore, we propose an alternative grid generation approach that mitigates the curse of dimensionality.
\subsection{\texorpdfstring{\acl{rqmc}}{RQMC} Sobol Grid}
\label{sec:grid_sobol}
To reduce the number of trainable parameters, 
we introduce a random grid approach. For each dimension of $g \in \Cl_{p,q,r}$ we sample a random number inside a fixed grid range, e.g. $[-2, +2]$, and repeat this process for all of the grid points. Similar to how random search was shown to be a better approach than grid search for hyperparameter optimization~\cite{bergstra2012}, we aim to achieve a sufficient coverage of the hypercube of all possible grid points $[-2, +2]^D$ while requiring fewer grid points than the full grid approach.
The sampled grid points act like supports for our learnable activation function $\Phi(x)$ or $\Phi_{\Cl}(x)$. 
A natural requirement for these points is to cover the hypercube evenly, for each realization, and not only in expectation. 
However, we noticed that naively sampled random grid  points do not necessarily show this property.
We therefore introduce the \ac{rqmc} scrambled Sobol sequence grid, or \emph{Sobol grid}, where we generate the grid points using a quasi-random Sobol sequence~\cite{sobol1967distribution,owen2021strong}.
As shown in \figurename~\ref{fig:gridcomparison}, these quasi-random numbers cover the space more evenly than true random numbers and therefore are better suited for our sampling of grid points.
We call this variant of the model the \ac{sobolcliffkan}. In our experiments, 
we only use the \emph{Sobol random grid}.

\begin{figure}
    \centering
    \includegraphics[width=\linewidth]{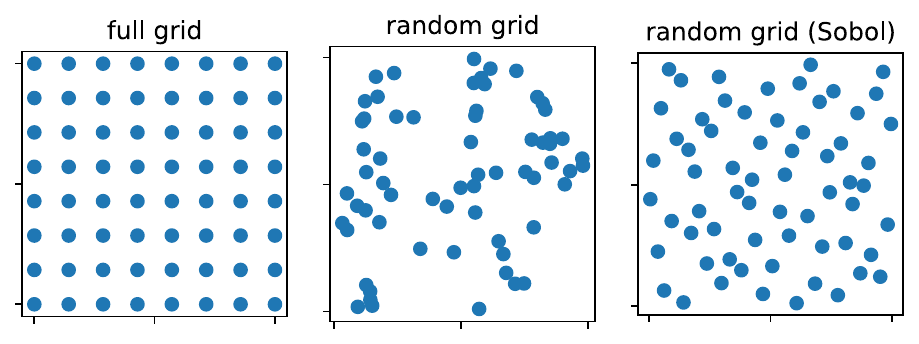}
    \caption{Comparison of full grid, random grid, and quasi-random \emph{Sobol grid} using Sobol sequences with 64 grid points each inside a grid from $(-2, -2)$ to $(2,2)$.}
    \label{fig:gridcomparison}
\end{figure}

\subsection{Expressivity of 
\texorpdfstring{\ac{sobolcliffkan}}{Sobol-CliffordKAN}
}
\label{sec:expressivity}
Using the uniform grid does not scale in high-dimensional space, however, using a uniform random grid results in very high variance. By introducing the \emph{Sobol grid}, we reduce the variance of the model \eqref{eq:cliffordGridCliffordRBF}.
We now analyze the properties of the proposed approach. 
A $(t, m, d)$-net in base $b$, with \mbox{$0 \le t \le m$}, is a set of samples $y_1,\dots,y_{b^m} \in [0,1]^d$ such that for each hyper-cube of volume $b^{t-m}$ it contains exactly $b^t$ points. 
Here, the total number of points is called $n=b^m=|G|$, while in the rest of the paper $n$ is the dimension of the vector space.
We now consider the integral $h(x) = \int_{[0,1]^d} g(y) k(x-y) dy$. If we sample $\{ y_i \}_{i \in [n]}$ in $[0,1]^d$, we can use \ac{mc} to estimate $h(x) \approx \frac1{n} \sum_{i=1}^n g(y_i)k(x-y_i) = \hat{h}_n(x)$. 
We notice that \eqref{eq:cliffordGridNaiveRBF} 
$\Phi(x) = \sum_{g  \in G}  w_g~\phi(x-g) = \hat{h}_n(x)$, with $g(y_i) = w_g$ and $k(x-y_i) = \phi(x-g)$.
If we denote $\sigma^2(h(x)) = \E [ \| h(x)-\mu(h(x)) \|_{\infty}^2], \mu(h(x)) = \int_{[0,1]^d} h(x) dx$, it follows that:
\begin{property}
\label{th:variance}
If the grid points are generated using a scrambled $(t, m, d)$-net in base $b$ 
\cite{owen2021strong,owen1998scrambling}, 
and $g(y),k(x)$ are smooth functions, s.t. $f_x(y)=g(y)k(x-y) \in L^{1+\epsilon}[0,1]^d, \epsilon>0$ (space of $(1+\epsilon)$ integrable functions), and $\exists ~ \Gamma < \infty$, then 
\begin{equation}
\E [ \| \hat{\mu}_n(h(x)) - \mu(h(x)) \|_{\infty}^2  ] \le \Gamma \frac{\sigma^2(h(x))}{n} = O(n^{-1}).
\end{equation}
\end{property}
\begin{IEEEproof}
We apply the result of 
\cite{owen1995randomly,owen2021strong} 
to the integration of function $f_x(y) = g(y) k(x-y)$.
$\E [ \| \hat{\mu}^x_n - \mu_x \|^2  ] \le \Gamma \frac{\sigma_x^2}{n} $, with $\sigma^2_x = \sigma^2(f_x) = \E[\|f_x(y)-\mu_x\|^2]$, $\forall x \in [0,1]^d$, and $\hat{\mu}^x_n = \frac1{n} \sum_{i=1}^n f_x(y_i)$. This is  also true for the maximum of the variances, and therefore the result follows. 
\end{IEEEproof}

We can now state our lemma: 
\begin{lemma}
If we use the \emph{Sobol grid} of size $n$, then 
$ \Phi(x) = \sum_{g  \in G}  w_g~\phi(x-g)$ is an unbiased estimator of  
$h(x) = \int_{[0,1]^d} g(y) \phi(x-y) dy$ 
with variance $O(n^{-1})$.
\end{lemma}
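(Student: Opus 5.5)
The proof has two parts: unbiasedness, which comes from the randomization in the scrambling, and the variance rate, which follows from Property~\ref{th:variance}. Throughout I read the estimator the way the paper does: $\Phi(x)=\hat h_n(x)=\frac1n\sum_{i=1}^n g(y_i)\phi(x-y_i)$, with the factor $\frac1n$ absorbed into the weights $w_g=g(y_i)/n$. The points $y_i$ are the scrambled Sobol points, rescaled from $[-2,2]^D$ to $[0,1]^d$.

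\textbf{Unbiasedness.} I will use the standard fact about Owen scrambling of a $(t,m,d)$-net in base $b$ \cite{owen1995randomly,owen1998scrambling}: each scrambled point $y_i$ is marginally uniform on $[0,1]^d$. The scrambling permutes the base-$b$ digits at every level uniformly at random, so each coordinate of $y_i$ is $U[0,1]$ and the coordinates are independent. The points are dependent on one another, but linearity of expectation only needs the marginals. Hence, for each fixed $x$,
\[
\E[\Phi(x)]=\frac1n\sum_{i=1}^n \E\bigl[g(y_i)\phi(x-y_i)\bigr]=\int_{[0,1]^d} g(y)\phi(x-y)\,dy=h(x).
\]
This requires $f_x(y)=g(y)\phi(x-y)$ to be integrable. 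That holds because $\phi(x-y)=\exp(-\|x-y\|^2)$ is bounded by $1$ and $g$ is assumed smooth on the compact cube, so $f_x\in L^{1+\epsilon}$ as required in Property~\ref{th:variance}.

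\textbf{Variance.} Since the estimator is unbiased, its variance equals its mean squared error, $\E\bigl[\|\hat h_n(x)-h(x)\|^2\bigr]$. The plan is to apply Property~\ref{th:variance} with $k=\phi$. For each fixed $x$, Owen's bound for scrambled nets gives $\E\bigl[\|\hat\mu_n^x-\mu_x\|^2\bigr]\le\Gamma\,\sigma_x^2/n$. Here $\Gamma$ depends only on $t$, $b$ and $d$, not on the integrand.

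The remaining step is to make the bound uniform in $x$, that is, to show $\sup_x\sigma_x^2<\infty$. Because $|\phi|\le 1$,
\[
\sigma_x^2\le\int_{[0,1]^d} g(y)^2\,dy,
\]
which is finite and independent of $x$. This gives variance $O(n^{-1})$ uniformly, matching the $\|\cdot\|_\infty$ form of the Property.

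\textbf{Main obstacle.} The delicate point is identifying the trained network $\Phi$ with the estimator $\hat h_n$. The weights $w_g$ must be fixed values of a function $g$ evaluated at the grid points, not parameters learned after the scrambling has been drawn. Otherwise the weights depend on the sample and the expectation argument fails.

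I would state this assumption explicitly, so that the lemma is a statement about the representational capacity of the Sobol grid. A second point is that Clifford-valued $g$ are handled componentwise. The bound then holds for each of the $D$ real components, so the constant in the variance bound picks up at most a factor of $D$.
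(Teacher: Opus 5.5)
Your proposal is correct and follows the paper's route: identify $\Phi$ with the scrambled-net estimator $\hat h_n$ and invoke Owen's variance bound through Property~\ref{th:variance}. It is more complete than the paper's own argument. The paper takes unbiasedness for granted, even though the Property only bounds the mean squared error; you derive it from the marginal uniformity of each scrambled point, fix the $\frac1n$ normalization of the weights, and make explicit that the $w_g$ must be values of a fixed $g$ rather than trained parameters.
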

\begin{IEEEproof}
Using the Sobol scrambled sequence \cite{owen1998scrambling}, we define the estimator of $h(x)$ with the condition in Pr.~\ref{th:variance} as $ \hat{h}_n(x) = \frac1{n} \sum_{i=1}^n g(y_i)k(x-y_i)$. As before, we rewrite
$\Phi(x) = \sum_{g  \in G}  w_g~\phi(x-g) = \hat{h}_n(x)$, with $g(y_i) = w_g$ and $k(x-y_i) = \phi(x-g)$. Therefore, from Pr.~\ref{th:variance}, we have that $\Phi(x)$ is an unbiased estimator $h(x)$ with variance $O(n^{-1})$.
\end{IEEEproof}
Therefore, training our KAN network means training the function $g(y)$, evaluated at the grid points. Our proposed \ac{sobolcliffkan} can learn any function that can be written as a convolution with the kernel $\phi(x)$ with an error of
$O(n^{-1})$. We further notice that with this interpretation, the role of the kernel is to smooth the learned function $g(y)$, and for the limit of $\beta \to \infty$ and $n \to \infty$, we recover the original function, if we consider the new kernel $k(x) = \sfrac{\beta}{\sqrt{\pi}} \phi(\beta x) \to_{\beta \to \infty} = \delta(x)$.
Since $h'(x)= \nabla_x h(x) = \int_{[0,1]^d}g(y)\nabla_x\phi(x-y)dy=\int_{[0,1]^d}g(y)\phi'(x-y)dy$, similar result shall extend to \eqref{eq:cliffordGridCliffordRBF}.

\begin{figure}[t]
    \centering
    \resizebox{0.8\columnwidth}{!}{%
        \DeclareRobustCommand 
\Compactcdots{\mathinner{\cdotp\mkern-4mu\cdotp\mkern-4mu\cdotp}}

\begin{tikzpicture}[line join=round, line cap=round, font=\large]

\definecolor{tabblue}{RGB}{31,119,180}
\definecolor{taborange}{RGB}{255,127,14}
\definecolor{tabgreen}{RGB}{44,160,44}

\def\exx{1.05} \def\exy{0.0}
\def\eyx{0.55} \def\eyy{0.35}
\def\ezx{0.0}  \def\ezy{0.90}

\newcommand{\proj}[4]{%
  \pgfmathsetmacro{\PX}{(#2)*\exx + (#3)*\eyx + (#4)*\ezx}%
  \pgfmathsetmacro{\PY}{(#2)*\exy + (#3)*\eyy + (#4)*\ezy}%
  \coordinate (#1) at (\PX cm,\PY cm);%
}

\newcommand{\makecuboid}[7]{%
  \pgfmathsetmacro{\xA}{#2}%
  \pgfmathsetmacro{\yA}{#3}%
  \pgfmathsetmacro{\zA}{#4}%
  \pgfmathsetmacro{\xB}{#2 + #5}%
  \pgfmathsetmacro{\yD}{#3 + #6}%
  \pgfmathsetmacro{\zE}{#4 + #7}%
  \proj{#1A}{\xA}{\yA}{\zA}%
  \proj{#1B}{\xB}{\yA}{\zA}%
  \proj{#1C}{\xB}{\yD}{\zA}%
  \proj{#1D}{\xA}{\yD}{\zA}

  \proj{#1E}{\xA}{\yA}{\zE}%
  \proj{#1F}{\xB}{\yA}{\zE}%
  \proj{#1G}{\xB}{\yD}{\zE}%
  \proj{#1H}{\xA}{\yD}{\zE}%
}

\newcommand{\drawcuboidfaces}[2]{%
  \fill[#2!25] (#1D)--(#1C)--(#1G)--(#1H)--cycle; 
  \fill[#2!15] (#1A)--(#1B)--(#1C)--(#1D)--cycle; 
  \fill[#2!20] (#1A)--(#1D)--(#1H)--(#1E)--cycle; 
  \fill[#2!35] (#1B)--(#1C)--(#1G)--(#1F)--cycle; 
  \fill[#2!45] (#1A)--(#1B)--(#1F)--(#1E)--cycle; 
  \fill[#2!55] (#1E)--(#1F)--(#1G)--(#1H)--cycle; 
}

\newcommand{\drawcuboidhidden}[1]{%
  \draw[dashed, black!60]
    (#1A)--(#1D)
    (#1D)--(#1C)
    (#1D)--(#1H);
}

\newcommand{\drawcuboidvisible}[1]{%
  \draw[black]
    (#1E)--(#1F)--(#1G)--(#1H)--cycle
    (#1B)--(#1C)--(#1G)
    (#1A)--(#1B)
    (#1A)--(#1E)
    (#1B)--(#1F);
}

\newcommand{\drawaxes}[4]{%
  \proj{#1O}{0}{0}{0}
  \proj{#1X}{#2}{0}{0}
  \proj{#1Y}{0}{#3}{0}
  \proj{#1Z}{0}{0}{#4}

  \draw[->, thick]
    (#1O) -- (#1X)
    node[midway, sloped, below] {Nodes};

  \draw[->, thick]
    (#1O) -- (#1Y)
    node[midway, sloped, above] {Batch};

  \draw[->, thick]
    (#1O) -- (#1Z)
    node[midway, sloped, above] {Clifford Dimension};
}

\makecuboid{D}{0}{0}{3.5}{9.5}{2}{1.5}
\makecuboid{N}{4}{0}{0}{1.5}{2}{5}
\makecuboid{C}{8}{0}{0}{1.5}{2}{1.5}

\drawcuboidfaces{D}{taborange}
\drawcuboidfaces{N}{tabgreen}
\drawcuboidfaces{C}{tabblue}

\drawcuboidhidden{D}
\drawcuboidhidden{N}
\drawcuboidhidden{C}

\drawcuboidvisible{D}
\drawcuboidvisible{N}
\drawcuboidvisible{C}

\draw[draw=none, fill=taborange] (DA) -- (DB) node[pos=0.2, sloped, below] {Dimension-wise};
\draw[draw=none] (NA) -- (NE) node[pos=0.4, sloped, above] {Node-wise};
\draw[draw=none] (CE) -- (CH) node[midway, sloped, above] {Component-wise};

\drawaxes{A}{10}{2.5}{5.5}

\begin{scope}[yshift=-1cm] 
  \draw (1,0) circle (.5);
  \draw (0.6,-1) -- (0.8,-0.6);
  \draw (1.4,-1) -- (1.2,-0.6);
  \node at (1,-0.9) {$\Compactcdots$};
  \node at (3,0) {$\cdots$};
  \draw (5,0) circle (.5);
  \draw (4.6,-1) -- (4.8,-0.6);
  \draw (5.4,-1) -- (5.2,-0.6);
  \node at (5,-0.9) {$\Compactcdots$};
  \node at (7,0) {$\cdots$};
  \draw (9,0) circle (.5);
  \draw (8.6,-1) -- (8.8,-0.6);
  \draw (9.4,-1) -- (9.2,-0.6);
  \node at (9,-0.9) {$\Compactcdots$};
\end{scope}
\begin{scope}[yshift=+5.7cm, yscale=-1, xshift=+1cm]
  \draw (0.6,-.2) -- (0.8,.2);
  \draw (1.4,-.2) -- (1.2,.2);
  \node at (1,-.1) {$\Compactcdots$};
  \node at (3,0) {$\cdots$};
  \draw (4.6,-.2) -- (4.8,.2);
  \draw (5.4,-.2) -- (5.2,.2);
  \node at (5,-.1) {$\Compactcdots$};
  \node at (7,0) {$\cdots$};
  \draw (8.6,-.2) -- (8.8,.2);
  \draw (9.4,-.2) -- (9.2,.2);
  \node at (9,-.1) {$\Compactcdots$};
\end{scope}

\end{tikzpicture}
    }
    \caption{Visualization of \emph{dimension-wise} (orange), \emph{node-wise} (green) and \emph{component-wise} (blue) batch normalization within one layer of the \ac{kan} architecture.}    
    \label{fig:batchnorms}
\end{figure}

\subsection{Batch Normalization}
\label{sec:batchnorm}
Since the batch normalization approaches of \ac{cvkan} only consider complex numbers \cite{wolff2025cvkan}, we introduce three types of normalization for the arbitrarily high-dimensional Clifford Algebras (see \figurename~\ref{fig:batchnorms}), to make sure inputs to the next layer of our \ac{cliffkan} likely stay within the grid ranges:

\begin{itemize}
\item \emph{Dimension-wise} batch normalization applies batch normalization over each individual dimension of all nodes in one layer combined.
\item In \emph{node-wise} batch normalization we normalize the input in each single node of the \ac{kan} architecture over all its dimensions at once. This approach is intuitively the most suitable approach for \acp{kan}, as each node can compute different correlations in the data and therefore each node's output should be normalized independently of other nodes in the same layer.
\item \emph{Component-wise} batch normalization is a combination of \emph{dimension-wise} and \emph{node-wise} batch normalization, as each node and each dimension get normalized independently over all samples in the batch.
\end{itemize}

Additionally \emph{no-normalization} can be applied, but then there is a high chance that inputs into the next layer exceed the grid ranges and are far away from every grid point, so that the \ac{rbf}'s output for such out of range data is $0$.

\begin{figure*}[ht]
    \centering
    \begin{subfigure}{\textwidth}
    \centering
    \includegraphics[width=\textwidth]{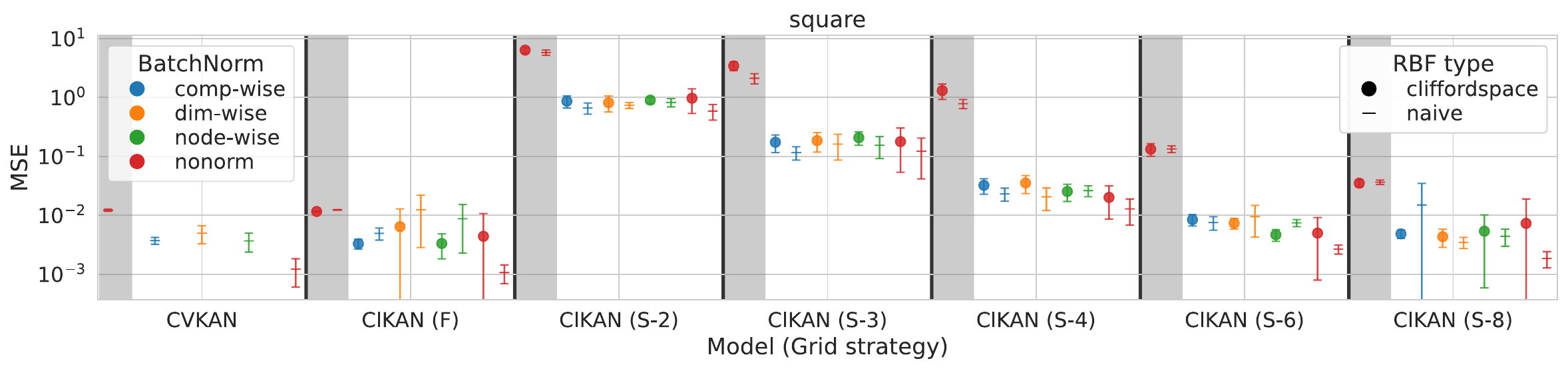}
    \caption{\emph{square} dataset.}
    \label{fig:funcfit_cv_square}
    \end{subfigure}\\
    \begin{subfigure}{\textwidth}
    \centering
    \includegraphics[width=\textwidth]{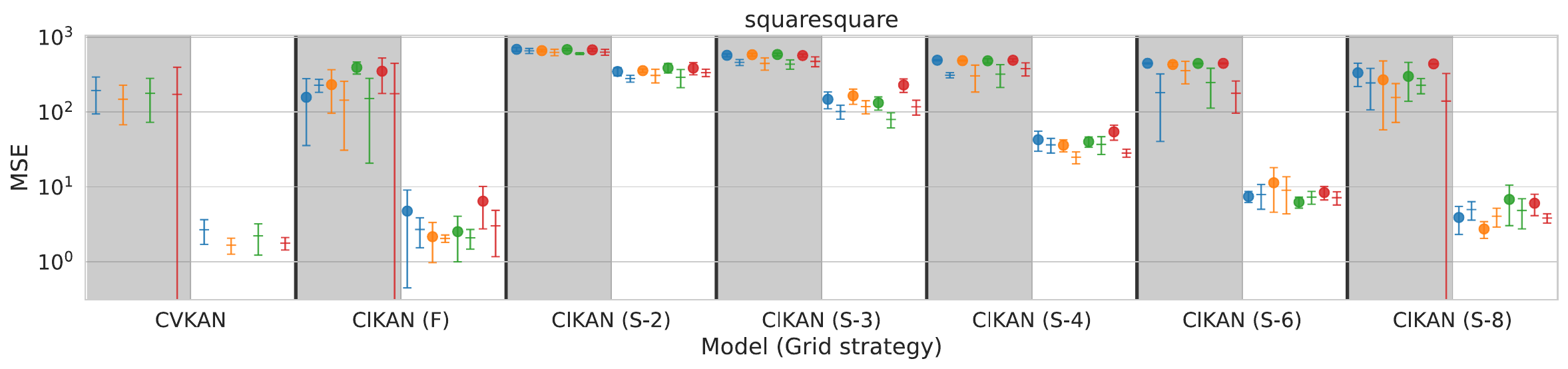}
    \caption{\emph{squaresquare} dataset.}
    \label{fig:funcfit_cv_squaresquare}
    \end{subfigure}   
    \caption{Overview of \ac{mse} for all experiments for complex-valued ($\Cl_{0,1,0}$) synthetic function-fitting tasks
    on datasets \emph{square} and \emph{squaresquare} with color indicating the type of batch normalization, shape the type of \ac{rbf} and shading the architecture size with shaded regions representing the small model architecture. X-axis labels correspond to \ac{cvkan} baseline followed by \ac{cliffkan} \textbf{F}ull grid and \emph{\textbf{S}obol grid} with number of grid points per dimension. For better readability S-5 and S-7 have been omitted. Y-axis shown in log-scale.}
    \label{fig:funcfit_cv}
\end{figure*}

\section{Experiments}
\label{sec:experiments}
In all of our experiments we applied a $5$-fold cross-validation, where we also re-initialized the weights and \emph{Sobol grids} for each run, and used a validation loss plateau scheduler with an initial learning rate of $0.1$ - in contrast to \cite{wolff2025cvkan} who used a learning rate of $0.01$ -, a reduction factor $0.9$, patience of $20$ epochs and a threshold of $0.001$. We also applied early stopping after the validation loss has not decreased by at least $0.001$ in the last $200$ epochs. For the full grid approach we adapted the number of grid points per dimension $D$ from \ac{cvkan} as $8$, so that the total number of grid points is $(N_g)^D=8^D$.
For the \emph{Sobol grid} we selected $2$ to $8$ grid points per dimension to study the minimum required number of random grid points for similar performance as full grid.
We also chose grid ranges of $[-2, 2]$ for each dimension compliant with \ac{cvkan}. For all regression tasks, we trained using \ac{mse} and evaluated using \ac{mse} and \ac{mae} metric, while for classification tasks we used \ac{ce} for training and additionally evaluated the accuracy.

\subsection{Complex-valued Basic Synthetic Formulas}
\label{sec:exp_funcfit}
First, we evaluated whether our \ac{cliffkan} achieves the same performance as \ac{cvkan} on the four simple, synthetic complex-valued function-fitting tasks defined in \cite{wolff2025cvkan}, where the functions are $\{ x^2, \sin(x),  x_1 x_2, (x_1^2+x_2^2)^2 \}$ called \emph{square}, \emph{sin}, \emph{mult} and \emph{squaresquare}.
We use $5\,000$ samples for train- and validation-split combined and another $5\,000$ samples for test-split and adapt the same architecture sizes as \cite{wolff2025cvkan}.

We also evaluated on larger datasets more closely related to the real-world. We used the same complex-valued \emph{holography} dataset from \ac{cvkan} with $100\,000$ samples for train and validation split combined and another $100\,000$ samples for the test split.

Additionally we briefly analyzed the performance on the \emph{knot} dataset \cite{davies2021advancing}, which was also used for evaluation in \ac{cvkan}.

\subsection{Higher-dimensional Clifford Algebras}
\label{sec:exp-high-dim}
To analyze how our \ac{cliffkan} performs in higher dimensions than complex, we used the same \emph{square}, \emph{mult} and \emph{squaresquare} formulas to generate datasets with higher dimensionality.
We evaluate the performance on key Geometrical Algebras: 1) Euclidean GA over $\R^2$, i.e. $\Cl_{2,0,0}$, 2) the quaternion isomorphic $\Cl_{0,2,0}$, 3) the Conformal GA $\Cl_{1,1,0}$, and 4) the one-dimensional \ac{pga} $\Cl_{1,0,1}$.
Because the underlying Clifford Algebras are four-dimensional in contrast to the previously used two-dimensional complex values, we need more data points to cover the space sufficiently. For complex-valued function-fitting we sampled $5\,000$ data points in a range $[-2,2]$ for each dimension. Therefore, we had a sampling density of $\frac{5000}{4^2}$. In order to achieve the same sampling density in a four-dimensional hypercube, we need $\frac{N}{4^4} = \frac{5000}{4^2}$ and therefore $N = 16 \cdot 5000 =  80\,000$ samples instead for combined train- and validation- as well as test-split.

We ran experiments with the most promising batch normalization and \ac{rbf} computation strategy given by the previous experiments (cf. Section~\ref{sec:exp_funcfit}).
Furthermore we conducted exhaustive experiments for grid type and number of random grid points, to see how well our proposed \emph{Sobol grid} approach behaves in higher dimensions.

\section{Results}
\label{sec:results}
In this section we will show and describe the most interesting results. We omit the plot for complex-valued function-fitting on \emph{sin} and \emph{mult} dataset and leave out \emph{Sobol grids} with $N_g \in \{5,7\}$ for better readability.
We will make all the results available in a machine readable way together with additional plots and tables inside our repository for the code base.
\subsection{Comparison to baseline}
\label{sec:results_baseline}
First we want to compare our \ac{cliffkan} to the baseline \ac{cvkan} as well as to its improved version \cite{che2026improved}. \tableautorefname~\ref{tab:baselineComp} compares the best performing model from \ac{cvkan} by Wolff et al. \cite{wolff2025cvkan}, improved \ac{cvkan} by Che et al. \cite{che2026improved} and our \ac{cliffkan} for each complex-valued function 
fitting dataset.
It should be noted that in contrast to \ac{cvkan} that used $5\,000$ data points for 5-fold cross-validation training and reported validation losses, we introduce an additional test split of the same size and report test losses but still only do training and cross-validation on $5\,000$ data points. This dataset setup was used for \ac{cvkan} with learning rate (lr) of $0.1$ and our \ac{cliffkan}. It can be seen that the increased learning rate lr=$0.1$ makes the results of \ac{cvkan} far better and that our \ac{cliffkan} achieves similar results with the same learning rate. The improved version of \ac{cvkan} is better than standard \ac{cvkan} with $0.01$ learning rate, but cannot keep up with the finer tuned learning rate.

\begin{table}[t]
    \centering
        \caption{Comparison of the Best Performing Model (\ac{mse}) for Each of the Four Complex-valued Function-Fitting Tasks. Values for Che et al. \cite{che2026improved} and Wolff et al. with lr=0.01 \cite{wolff2025cvkan} Are Copied from Their Respective Paper. Values for \ac{cliffkan} Stem from the Best Models Across All \ac{rbf}, Batch Normalization and Grid Strategies and All Evaluated Architecture Sizes.}
    \label{tab:baselineComp}
    \begin{tabular}{|c|c|c|c|c|c|}
        \toprule
        \multirow{2}{*}{Dataset}  & \multicolumn{2}{|c|}{CVKAN} & improved  & \multirow{2}{*}{\ac{cliffkan}}\\
        & lr=0.01 \cite{wolff2025cvkan} & lr = 0.1 & CVKAN \cite{che2026improved}& \\
        \midrule
        square & 0.013 & 0.001 & 0.009 & 0.001\\
        sin & 0.005 & 0.001 & 0.005 & 0.001\\
        mult & 0.045 & 0.005 & 0.029 & 0.002\\
        squaresquare & 8.150 & 1.665 & 7.355 & 2.049 \\
        \bottomrule
    \end{tabular}
\end{table}

\subsection{Complex-valued Basic Synthetic Formulas}
\label{sec:results_cv_basic}
\figurename~\ref{fig:funcfit_cv_square} and \figurename~\ref{fig:funcfit_cv_squaresquare} show that our full grid \ac{cliffkan} performs as well as \ac{cvkan} with the same number of parameters.
The performance of the \emph{Sobol grid} strategy gets better with an increasing number of grid points per dimension and for $N_g=8$ grid points per dimension the \emph{Sobol grid} approach is on par with the full uniform grid. The larger architectures perform consistently better than smaller architectures.
\figurename~\ref{fig:funcfit_cv_square} has smaller shaded regions and shows only the \emph{no-normalization} setting for small architectures because the small architecture size for the \emph{square} dataset is $[1,1]$ and we never apply batch normalization on the final output of the model. Therefore, other batch normalization approaches than \emph{no-normalization} are not applicable for this architecture size.

For the choice of \ac{rbf} calculation the naive \eqref{eq:naiveRBF} and Clifford approach \eqref{eq:cliffordspaceRBF} perform similarly well. When comparing the different batch normalization approaches no clear winner can be spotted. In some cases (e.g. \emph{square}, full grid and naive \ac{rbf} with the big architecture in \figurename \ref{fig:funcfit_cv_square}) \emph{no-normalization} performs best, in other cases (e.g. \emph{squaresquare}, full grid and Clifford \ac{rbf} with the big architecture in \figurename \ref{fig:funcfit_cv_squaresquare}) \emph{node-wise} or \emph{dim-wise} batch normalization are the best choice.

\begin{table}[hb!]
    \centering
         \caption{Comparison of \ac{cvkan} and \ac{cliffkan} on the \emph{Holography} Dataset for the Two Largest Architectures and Node-wise Batch Normalization. Choice of \ac{rbf} is Clifford \ac{rbf} \eqref{eq:cliffordspaceRBF} for \ac{cliffkan}. GT=Grid Type $\in \{$\textbf{F}ull Grid and \emph{\textbf{S}obol Grid}$\}$, $N_g$ = Number of Grid Points per Dimension, \mbox{$N_p$~=~Number of Parameters}}
    \label{tab:holography}
\begin{tabular}{|c|c|c|c|c|c|}
\toprule
Model & $N_p$ & GT & $N_g$ & Test MSE & Test MAE \\
\bottomrule
 \multicolumn{6}{c}{$3 {\times} 10 {\times} 3 {\times} 1$} \\
\toprule
CVKAN & 8342& F & 8 & ${\bf 0.021}$ \scriptsize{$\pm 0.003$} & ${\bf 0.090}$\scriptsize{$\pm 0.011$} \\
 \hline
   \multirow{8}{*}{\ac{cliffkan}} & 8342 & F & 8 & ${\bf 0.030} $ \scriptsize{$\pm 0.005$} & ${\bf 0.092}$\scriptsize{$\pm 0.008$} \\
  & 782 & S & 2 & $2.799 $ \scriptsize{$\pm 0.780$} & $1.211 $\scriptsize{$\pm 0.167$} \\
  & 1412 & S & 3 & $0.713 $ \scriptsize{$\pm 0.135$} & $0.602 $\scriptsize{$\pm 0.069$} \\
  & 2294 & S & 4 & $0.275 $ \scriptsize{$\pm 0.133$} & $0.348 $\scriptsize{$\pm 0.099$} \\
  & 3428 & S & 5 & $0.101 $ \scriptsize{$\pm 0.030$} & $0.192 $\scriptsize{$\pm 0.034$} \\
  & 4814 & S & 6 & $0.070 $ \scriptsize{$\pm 0.019$} & $0.154 $\scriptsize{$\pm 0.031$} \\
  & 6452 & S & 7 & $0.043 $ \scriptsize{$\pm 0.006$} & $0.117 $\scriptsize{$\pm 0.009$} \\
   & 8342 & S & 8 & $0.036 $ \scriptsize{$\pm 0.008$} & $0.111 $\scriptsize{$\pm 0.012$} \\
\bottomrule
   
\multicolumn{6}{c}{$3 {\times} 10 {\times} 5 {\times} 3 {\times} 1$} \\
\toprule
CVKAN&  12972 & F & 8 & ${\bf 0.016} $ \scriptsize{$\pm 0.001$} & ${\bf 0.066}$\scriptsize{$\pm 0.003$} \\
  \hline
 \multirow{8}{*}{\ac{cliffkan} } &  12972 & F & 8 & ${\bf 0.025} $ \scriptsize{$\pm 0.004$} & ${\bf 0.069}$\scriptsize{$\pm 0.006$} \\
   &  1212 & S & 2 & $1.891 $ \scriptsize{$\pm 0.275$} & $0.952 $\scriptsize{$\pm 0.078$} \\
  &  2192 & S & 3 & $0.272 $ \scriptsize{$\pm 0.036$} & $0.358 $\scriptsize{$\pm 0.022$} \\
  &  3564 & S & 4 & $0.093 $ \scriptsize{$\pm 0.021$} & $0.179 $\scriptsize{$\pm 0.017$} \\
 &  5328 & S & 5 & $0.059 $ \scriptsize{$\pm 0.012$} & $0.123 $\scriptsize{$\pm 0.010$} \\
  &  7484 & S & 6 & $0.041 $ \scriptsize{$\pm 0.006$} & $0.100 $\scriptsize{$\pm 0.012$} \\
  &  10032 & S & 7 & $0.029 $ \scriptsize{$\pm 0.004$} & $0.081 $\scriptsize{$\pm 0.005$} \\
  &  12972 & S & 8 & $0.028 $ \scriptsize{$\pm 0.004$} & $0.079 $\scriptsize{$\pm 0.003$} \\
 \bottomrule
 \end{tabular}
\end{table}

\begin{table}[hb]
    \centering
         \caption{Comparison of Batch Normalization Schemes for \ac{cvkan} and \ac{cliffkan} on the \emph{Holography} Dataset for the Largest Architecture $[3, 10, 5, 3, 1]$ and Full Grid with $N_g=8$. Choice of \ac{rbf} is Clifford \ac{rbf} \eqref{eq:cliffordspaceRBF} for \ac{cliffkan}.}
    \label{tab:holography_norms}
\begin{tabular}{|c|c|c|c|c|}
\toprule
Model  & Norm & Test MSE & Test MAE \\
\midrule
\multirow{4}{*}{\ac{cvkan}} &  nodew. & $0.016 $ \scriptsize{$\pm 0.001$} & $0.066 $\scriptsize{$\pm 0.003$} \\
 &  dimw. & $0.015 $ \scriptsize{$\pm 0.003$} & $0.062 $\scriptsize{$\pm 0.008$} \\
 &  compw. & $0.017 $ \scriptsize{$\pm 0.002$} & $0.063 $\scriptsize{$\pm 0.004$} \\
 &  no & $51.059 $ \scriptsize{$\pm 62.526$} & $3.278 $\scriptsize{$\pm 3.951$} \\
\midrule
\multirow{4}{*}{\ac{cliffkan}} &  nodew. & $0.025 $ \scriptsize{$\pm 0.004$} & $0.069 $\scriptsize{$\pm 0.006$} \\
 &  dimw. & $0.021 $ \scriptsize{$\pm 0.003$} & $0.060 $\scriptsize{$\pm 0.005$} \\
 &  compw. & $0.028 $ \scriptsize{$\pm 0.009$} & $0.071 $\scriptsize{$\pm 0.006$} \\
 &  no & $0.867 $ \scriptsize{$\pm 1.061$} & $0.354 $\scriptsize{$\pm 0.382$} \\
\bottomrule
\end{tabular}
\end{table}

\subsection{Complex-valued Larger Datasets}
\label{sec:results_highdims}
The results on the \emph{holography} dataset 
in \tableautorefname~\ref{tab:holography} show that \ac{cliffkan} with full grid and \emph{Sobol grid} with $N_g=8$ and the baseline \ac{cvkan} perform similarly well. For the \emph{Sobol grids} with $N_g=7$, and therefore $\approx75\%$ of parameters, the test \ac{mse} is almost the same compared to a full grid model. With very low number of grid points $N_g \in \{2,3,4\}$ the model cannot approximate the underlying function of the dataset well.

When comparing the different batch normalization approaches in \tableautorefname~\ref{tab:holography_norms}, it can be clearly seen that \emph{no-normalization} performs by far the worst and the model is not able to learn anything. The other three approaches for batch normalization perform on par with very low differences between them.

We also analyzed the performance of our model on the knot dataset (Section \ref{sec:experiments}), but the accuracies of all models regardless of grid strategy and grid size were so close to each other and mostly above $90\%$ that we chose not to include the results as they would not give additional insight to recognize differences between the candidates.

\begin{figure*}[t]
    \centering
    \includegraphics[width=.96\textwidth]{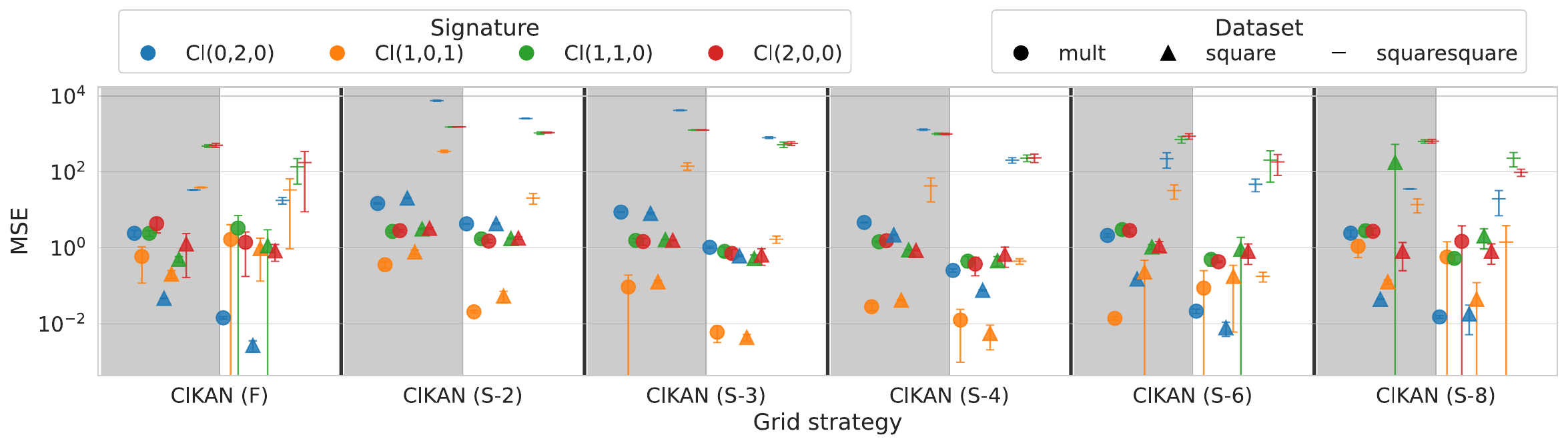}
    \caption{Function-fitting experiments on higher-dimensional Clifford Algebras. Color indicates the Clifford Algebra, shape indicates the dataset \emph{mult}, \emph{square} and \emph{squaresquare} and shading indicates the model architecture per dataset with shaded regions representing the small model architecture. X-axis labels correspond to \textbf{F}ull grid and \emph{\textbf{S}obol grid} with number of grid points per dimension. For better readability S-5 and S-7 have been omitted. Y-axis shown in log scale.}
        \label{fig:funcfit_highdims} 
\end{figure*}

\subsection{Higher-dimensional Clifford Algebras}
\figurename~\ref{fig:funcfit_highdims} shows all experiments on higher-dimensional Clifford Algebras. Similar to Section \ref{sec:results_cv_basic}, \emph{Sobol grid} with the same number of parameters performs approximately as good as the full grid approach. However, in this four-dimensional Clifford Algebra, the \emph{Sobol grid} experiments with $N_g \in \{3,4\}$ show very promising results with a drastically reduced number of parameters ($\approx 2 - 6\%$ of parameters). Overall, the \emph{mult} and \emph{square} datasets can be learned with a lower \ac{mse} than \emph{squaresquare} dataset. The $\Cl_{1,0,1}$ benefits the most from the \emph{Sobol grid} approach and shows even better performance with $N_g=4$ than for the full grid.

In \tableautorefname~\ref{tab:high_dimensional_clifford} it is shown, that for $\Cl_{1,0,1}$ and the \emph{square} and \emph{squaresquare} dataset the results of \emph{Sobol grid} are consistently better than full grid. Some grid sizes, e.g. $N_g\in \{5,6\}$ for \emph{square} and $N_g=8$ for \emph{squaresquare}, perform worse than other grid sizes with high standard deviations.

\begin{table}[bt]
    \centering
    \caption{Comparison of \ac{cliffkan} on the $\Cl_{1,0,1}$ for \emph{square} and \emph{squaresquare} Datasets, Largest Architectures and \emph{node-wise} Batch Normalization. Choice of \ac{rbf} is Clifford \ac{rbf} \eqref{eq:cliffordspaceRBF}. GT=Grid Type $\in \{$\textbf{F}ull Grid and \emph{\textbf{S}obol Grid}$\}$, $N_g$~=~Number of Grid Points per Dimension, \mbox{$N_p$~=~Number of Parameters.}}
    \label{tab:high_dimensional_clifford}
\begin{tabular}{|c|c|c|c|c|}
\toprule
$N_p$ & GT & $N_g$ & Test MSE & Test MAE \\
\bottomrule
\multicolumn{3}{l}{square} & \multicolumn{2}{l}{$1 {\times} 2 {\times} 1$} \\
\toprule
65572 & F & 8 & $0.968 $ \scriptsize{$\pm 0.834$} & $0.585 $\scriptsize{$\pm 0.381$} \\
 292 & S & 2 & $0.054 $ \scriptsize{$\pm 0.018$} & $0.187 $\scriptsize{$\pm 0.032$} \\
 1332 & S & 3 & ${\bf 0.004}$ \scriptsize{$\pm 0.001$} & ${\bf 0.054}$\scriptsize{$\pm 0.005$} \\
 4132 & S & 4 & $0.006 $ \scriptsize{$\pm 0.004$} & $0.058 $\scriptsize{$\pm 0.020$} \\
 10036 & S & 5 & $0.102 $ \scriptsize{$\pm 0.199$} & $0.148 $\scriptsize{$\pm 0.217$} \\
 20772 & S & 6 & $0.177 $ \scriptsize{$\pm 0.171$} & $0.270 $\scriptsize{$\pm 0.196$} \\
 38452 & S & 7 & $0.027 $ \scriptsize{$\pm 0.042$} & $0.096 $\scriptsize{$\pm 0.079$} \\
 65572 & S & 8 & $0.045 $ \scriptsize{$\pm 0.076$} & $0.114 $\scriptsize{$\pm 0.105$} \\
\bottomrule
\multicolumn{3}{l}{squaresquare} & \multicolumn{2}{l}{$2 {\times} 4 {\times} 2 {\times} 1$} \\
\toprule
295068 & F & 8 & $33.586 $ \scriptsize{$\pm 32.646$} & $3.492 $\scriptsize{$\pm 2.202$} \\
 1308 & S & 2 & $20.578 $ \scriptsize{$\pm 6.499$} & $3.194 $\scriptsize{$\pm 0.481$} \\
 5988 & S & 3 & $1.685 $ \scriptsize{$\pm 0.367$} & $0.900 $\scriptsize{$\pm 0.108$} \\
 18588 & S & 4 & $0.445 $ \scriptsize{$\pm 0.074$} & $0.453 $\scriptsize{$\pm 0.035$} \\
 45156 & S & 5 & $0.270 $ \scriptsize{$\pm 0.082$} & $0.345 $\scriptsize{$\pm 0.054$} \\
 93468 & S & 6 & ${\bf 0.178}$ \scriptsize{$\pm 0.052$} & ${\bf 0.285}$\scriptsize{$\pm 0.036$} \\
 173028 & S & 7 & $0.203 $ \scriptsize{$\pm 0.077$} & $0.304 $\scriptsize{$\pm 0.063$} \\
 295068 & S & 8 & $1.425 $ \scriptsize{$\pm 2.414$} & $0.609 $\scriptsize{$\pm 0.557$} \\
\bottomrule
\end{tabular}
\end{table}

\subsection{Discussion}

In our experiments (Section~\ref{sec:results}) we found that the way of calculating the \acp{rbf} does not seem to have a big influence on the model's performance, while the type of batch normalization to use depends on the dataset and there can be cases where \emph{no-normalization} at all seems beneficial, while in other cases models without batch normalization do not train at all. If normalization is helpful, the concrete type of batch normalization - be it \emph{node-wise}, \emph{dimension-wise} or \emph{component-wise} - does not matter much.

The \emph{squaresquare} dataset is by far the most challenging one to train on. 
We suspect this depends on the large variability of its output values, compared to \emph{square}.
We noticed very early in our experiments that \ac{cvkan} as well as \ac{cliffkan} were highly dependent on a good initialization. Therefore, we have increased the learning rate by a factor of $10$ to a value of $0.1$ in comparison to \cite{wolff2025cvkan}. This enables the models to drift further away from the weights given by random initialization and therefore reach more favorable regions. The results across different training runs and during cross-validation also became more stable. This small change alone already allowed us to substantially outperform the improved \ac{cvkan} \cite{che2026improved}.

For higher dimensions (Section~\ref{sec:results_highdims}) we have shown that our proposed \emph{Sobol grid} can be a good tool for parameter reduction in higher dimensions to mitigate the otherwise exponential growth of parameters for the full grid. Some Clifford Algebras, e.g. $\Cl_{1,0,1}$, even benefit from the \emph{Sobol grid} approach in terms of accuracy with drastically reduced parameters. Although it must be noted that the \emph{Sobol grid} sometimes also produces worse results than full grid and some grid sizes in our experiments for specific datasets produce high standard deviations.
This is a general discovery in the experiments, since some experimental setups, that should work in theory, only work for some cross-validation runs in practice therefore producing high \ac{mse} on average and a high standard deviation. With the increased learning rate the experiments overall became more stable and reproducible across runs, but some instabilities can still be observed across all tasks.

\section{Conclusion}
We have shown that our \ac{cliffkan} achieves similar performance on complex-valued datasets compared to \ac{cvkan} as a baseline.
Additionally, we have demonstrated that \ac{cliffkan} can also be successfully applied in higher dimensions for function-fitting tasks.

Furthermore, we have shown that our introduced \emph{Sobol grid} can be used to mitigate the curse of dimensionality for higher-dimensional Clifford Algebras, where a conventional full uniform grid approach would require an exponential number of parameters.
Surprisingly, there are cases where the limited number of parameters in a relatively small \emph{Sobol grid} actually benefit the model's performance and lead to better results than the full grid.

Further research could be dedicated towards learnable \ac{rbf} shape and alternative residual activation functions other than \ac{silu} with improved learning rate, similarly to \cite{che2026improved} for \ac{cvkan}.
Additionally, future work could focus on improving the stability of \acp{kan} and \ac{cvkan} and \ac{cliffkan} in particular, making them even less dependent on good initializations and more robust across different runs.

\section*{Acknowledgment}
LLMs (ChatGPT 5.2 and Google Gemini) were used for sentence-level reformulation, or as a search tool, but not for text generation.

The work was supported by the Deutsche Forschungsgemeinschaft DFG (SPP2363).

\bibliographystyle{IEEEtran}
\bibliography{References,clifford,sobol,kan}

@inproceedings{owen1995randomly,
  title={Randomly permuted (t, m, s)-nets and (t, s)-sequences},
  author={Owen, Art B},
  booktitle={Monte Carlo and Quasi-Monte Carlo Methods in Scientific Computing},
  pages={299--317},
  year={1995},
  organization={Springer}
}

@article{owen2021strong,
  title={A strong law of large numbers for scrambled net integration},
  author={Owen, Art B and Rudolf, Daniel},
  journal={SIAM Review},
  volume={63},
  number={2},
  pages={360--372},
  year={2021},
  publisher={SIAM}
}

@article{lundholm2009clifford,
  title={Clifford algebra, geometric algebra, and applications},
  author={Lundholm, Douglas and Svensson, Lars},
  journal={arXiv:0907.5356},
  year={2009}
}

@book{crumeyrolle2013orthogonal,
  title={{Orthogonal and symplectic Clifford algebras: Spinor structures}},
  author={Crumeyrolle, Albert},
  volume={57},
  year={2013},
  publisher={Springer Science \& Business Media}
}

@article{ruhe2023clifford,
  title={Clifford group equivariant neural networks},
  author={Ruhe, David and Brandstetter, Johannes and Forr{\'e}, Patrick},
  journal={NeurIPS},
  pages={62922--62990},
  year={2023}
}

@INPROCEEDINGS{wolff2025cvkan,
  author={Wolff, Matthias and Eilers, Florian and Jiang, Xiaoyi},
  booktitle={IJCNN}, 
  title={{CVKAN}: Complex-Valued Kolmogorov-Arnold Networks}, 
  year={2025},
  volume={},
  number={},
  pages={1-9},
    doi={10.1109/IJCNN64981.2025.11227425}
}

@InProceedings{che2026improved,
author="Che, Rui
and af Klinteberg, Ludvig
and Aryapoor, Masood",
title="Improved Complex-Valued {Kolmogorov--Arnold} Networks with Theoretical Support",
booktitle="24th EPIA Conference on Artificial Intelligence",
year="2026",
publisher="Springer Nature Switzerland",
pages="439--451",
isbn="978-3-032-05176-9"
}

@inproceedings{liu2025,
 author = {Liu, Ziming and Wang, Yixuan and Vaidya, Sachin and Ruehle, Fabian and Halverson, James and Soljacic, Marin and Hou, Thomas and Tegmark, Max },
 booktitle = {ICLR},
 pages = {70367--70413},
 title = {{KAN}: Kolmogorov\textendash Arnold Networks},
 year = {2025}
}

@article{yu2024kan,
  title={{KAN} or {MLP}: A fairer comparison},
  author={Yu, Runpeng and Yu, Weihao and Wang, Xinchao},
  journal={arXiv:2407.16674},
  year={2024}
}

@article{bergstra2012,
  author  = {James Bergstra and Yoshua Bengio},
  title   = {Random Search for Hyper-Parameter Optimization},
  journal = {Journal of Machine Learning Research},
  year    = {2012},
  volume  = {13},
  number  = {10},
  pages   = {281--305}
}

@article{davies2021advancing,
  title={Advancing mathematics by guiding human intuition with {AI}},
  author={Davies, Alex and Veli{\v{c}}kovi{\'c}, Petar and Buesing, Lars and Blackwell, Sam and Zheng, Daniel and Toma{\v{s}}ev, Nenad and Tanburn, Richard and Battaglia, Peter and Blundell, Charles and Juh{\'a}sz, Andr{\'a}s and others},
  journal={Nature},
  volume={600},
  number={7887},
  pages={70--74},
  year={2021},
  publisher={Nature Publishing Group}
}

@article{somvanshi2025survey,
  title={A survey on kolmogorov-arnold network},
  author={Somvanshi, Shriyank and Javed, Syed Aaqib and Islam, Md Monzurul and Pandit, Diwas and Das, Subasish},
  journal={ACM Computing Surveys},
  volume={58},
  number={2},
  pages={1--35},
  year={2025},
  publisher={ACM New York, NY}
}

@article{liu2025kolmogorov,
  title={Kolmogorov-Arnold networks meet science},
  author={Liu, Ziming and Tegmark, Max and Ma, Pingchuan and Matusik, Wojciech and Wang, Yixuan},
  journal={Phys. Rev. X},
  volume={15},
  pages={041051},
  year={2025},
  publisher={APS}
}

@inproceedings{ruhe2023geometric,
  title={Geometric {Clifford} algebra networks},
  author={Ruhe, David and Gupta, Jayesh K and De Keninck, Steven and Welling, Max and Brandstetter, Johannes},
  booktitle={ICML},
  pages={29306--29337},
  year={2023}
}

@article{bodner2024convolutional,
  title={Convolutional kolmogorov-arnold networks},
  author={Bodner, Alexander Dylan and Tepsich, Antonio Santiago and Spolski, Jack Natan and Pourteau, Santiago},
  journal={arXiv:2406.13155},
  year={2024}
}

@article{yu2024residual,
  title={Residual kolmogorov-arnold network for enhanced deep learning},
  author={Yu, Ray Congrui and Wu, Sherry and Gui, Jiang},
  journal={arXiv:2410.05500},
  year={2024}
}

@inproceedings{kuang2025exploring,
  title={Exploring Complex-Valued Convolutional {Kolmogorov-Arnold} Networks for {PolSaR} Image Classification},
  author={Kuang, Zuzheng and Bi, Haixia and Lv, Zhiyong and Xu, Chen},
  booktitle={IEEE International Geoscience and Remote Sensing Symposium},
  pages={1945--1949},
  year={2025}
}

@article{moody2025automatic,
  title={Automatic Grid Updates for {Kolmogorov-Arnold} Networks using Layer Histograms},
  author={Moody, Jamison and Usevitch, James},
  journal={arXiv:2511.08570},
  year={2025}
}

@article{zheng2025free,
  title={Free-Knots {Kolmogorov-Arnold} Network: On the Analysis of Spline Knots and Advancing Stability},
  author={Zheng, Liangwewi Nathan and Zhang, Wei Emma and Yue, Lin and Xu, Miao and Maennel, Olaf and Chen, Weitong},
  journal={arXiv:2501.09283},
  year={2025}
}

@article{chappell2014geometric,
  title={Geometric algebra for electrical and electronic engineers},
  author={Chappell, James M and Drake, Samuel P and Seidel, Cameron L and Gunn, Lachlan J and Iqbal, Azhar and Allison, Andrew and Abbott, Derek},
  journal={Proceedings of the IEEE},
  volume={102},
  number={9},
  pages={1340--1363},
  year={2014},
  publisher={IEEE}
}

@book{hestenes2015space,
  title={Space-time algebra},
  author={Hestenes, David},
  year={2015},
  publisher={Springer}
}

@inproceedings{brandstetter2022clifford,
  title={Clifford neural layers for pde modeling},
  author={Brandstetter, Johannes and van den Berg, Rianne and Welling, Max and Gupta, Jayesh K},
  booktitle={ICLR},
  year={2023}
}

@article{bayro2006conformal,
  title={Conformal geometric algebra for robotic vision},
  author={Bayro-Corrochano, Eduardo and Reyes-Lozano, Leo and Zamora-Esquivel, Julio},
  journal={Journal of Mathematical Imaging and Vision},
  volume={24},
  pages={55--81},
  year={2006}
}

@inproceedings{brehmer2023geometric,
  title={Geometric algebra transformers},
  author={Brehmer, Johann and De Haan, Pim and Behrends, S{\"o}nke and Cohen, Taco},
  booktitle={NeurIPS},
  year={2023}
}

@book{dorst2010geometric,
  title={Geometric algebra for computer science (revised edition): An object-oriented approach to geometry},
  author={Dorst, Leo and Fontijne, Daniel and Mann, Stephen},
  publisher={Morgan Kaufmann},
  year={2009}
}

@article{dorst2002geometric,
  title={Geometric algebra: a computational framework for geometrical applications},
  author={Dorst, Leo and Mann, Stephen},
  journal={IEEE Computer Graphics and Applications},
  volume={22},
  number={3},
  pages={24--31},
  year={2002}
}

@article{hildenbrand2008inverse,
  title={Inverse kinematics computation in computer graphics and robotics using conformal geometric algebra},
  author={Hildenbrand, Dietmar and Zamora, Julio and Bayro-Corrochano, Eduardo},
  journal={Advances in applied Clifford algebras},
  volume={18},
  pages={699--713},
  year={2008}
}

@article{hitzer2024Survey,
author = {Hitzer, Eckhard and Kamarianakis, Manos and Papagiannakis, George and Vašík, Petr},
title = {Survey of new applications of geometric algebra},
journal = {Mathematical Methods in the Applied Sciences},
volume = {47},
number = {14},
pages = {11368-11384},
doi = {https://doi.org/10.1002/mma.9575},
eprint = {https://onlinelibrary.wiley.com/doi/pdf/10.1002/mma.9575},
year = {2024}
}

@article{sprecher2002space,
  title={Space-filling curves and Kolmogorov superposition-based neural networks},
  author={Sprecher, David A and Draghici, Sorin},
  journal={Neural Networks},
  volume={15},
  number={1},
  pages={57--67},
  year={2002},
  publisher={Elsevier}
}

@book{kolmogorov1961representation,
  title={On the representation of continuous functions of several variables by superpositions of continuous functions of a smaller number of variables},
  author={Kolmogorov, Andrei Nikolaevich},
  year={1961},
  publisher={American Mathematical Society}
}

@article{ismayilova2024kolmogorov,
  title={On the Kolmogorov neural networks},
  author={Ismayilova, Aysu and Ismailov, Vugar E},
  journal={Neural Networks},
  volume={176},
  pages={106333},
  year={2024},
  publisher={Elsevier}
}

@article{lai2021kolmogorov,
  title={The {Kolmogorov} superposition theorem can break the curse of dimensionality when approximating high dimensional functions},
  author={Lai, Ming-Jun and Shen, Zhaiming},
  journal={arXiv:2112.09963},
  year={2021}
}

@article{poggio2022deep,
  title={How deep sparse networks avoid the curse of dimensionality: Efficiently computable functions are compositionally sparse},
  author={Poggio, Tomaso},
  journal={CBMM Memo},
  volume={10},
  pages={2022},
  year={2022}
}

@inproceedings{ferdausKANICEKolmogorovArnoldNetworks2024,
  author       = {Md Meftahul Ferdaus and
                  Mahdi Abdelguerfi and
                  Elias Ioup and
                  David Dobson and
                  Kendall N. Niles and
                  Ken Pathak and
                  Steven Sloan},
  title        = {{KANICE:} Kolmogorov-Arnold Networks with Interactive Convolutional
                  Elements},
  booktitle    = {4th International Conference on {AI-ML} Systems (AIMLSystems)},
  pages        = {13:1--13:10},
  year         = {2024}
}

@inproceedings{yangKolmogorovArnoldTransformer2024,
  author       = {Xingyi Yang and
                  Xinchao Wang},
  title        = {Kolmogorov-Arnold Transformer},
  booktitle    = {ICLR},
  year         = {2025}
}

@inproceedings{huEKANEquivariantKolmogorovArnold2024a,
  title={Incorporating Arbitrary Matrix Group Equivariance into {KANs}},
  author={Hu, Lexiang and Wang, Yisen and Lin, Zhouchen},
  booktitle = {ICML},
  year={2025}
}

@article{alesianiGeometricKolmogorovArnoldSuperposition2025,
  title = {Geometric {{Kolmogorov-Arnold Superposition Theorem}}},
  author = {Alesiani, Francesco and Maruyama, Takashi and Christiansen, Henrik and Zaverkin, Viktor},
  date = {2025-02-23},
  journal = {arXiv:2502.16664},
  eprinttype = {arXiv},
  eprintclass = {cs},
  urldate = {2025-03-05},
  year={2025},
  pubstate = {prepublished},
}

@article{alesiani2025variational,
  title={Variational Kolmogorov-Arnold Network},
  author={Alesiani, Francesco and Christiansen, Henrik and Errica, Federico},
  journal={arXiv:2507.02466},
  year={2025}
}

@article{sobol1967distribution,
  title={The distribution of points in a cube and the accurate evaluation of integrals},
  author={Sobol', Il'ya Meerovich},
  journal={Zhurnal Vychislitel'noi Matematiki i Matematicheskoi Fiziki},
  volume={7},
  number={4},
  pages={784--802},
  year={1967},
  publisher={Russian Academy of Sciences}
}

@article{owen1998scrambling,
  title={Scrambling {Sobol'} and {Niederreiter-Xing} points},
  author={Owen, Art B.},
  journal={Journal of Complexity},
  volume={14},
  number={4},
  pages={466--489},
  year={1998},
  publisher={Elsevier}
}

\end{document}